\documentclass[twoside]{article}

\usepackage[accepted]{aistats2026}
\usepackage[T1]{fontenc}
\usepackage[protrusion=false]{microtype}
\usepackage{graphicx}
\usepackage{booktabs}
\usepackage{multicol}
\usepackage{multirow}
\usepackage[table, dvipsnames]{xcolor}
\usepackage{hyperref}
\usepackage{amsmath}
\usepackage{amssymb}
\usepackage{amsthm}
\usepackage{tcolorbox}
\usepackage[round]{natbib}
\usepackage{bbding}
\usepackage[weather, misc]{ifsym}
\usepackage{svg}

\usepackage{tikz}
\usetikzlibrary{positioning, shapes, arrows.meta, calc}

\newcommand{\vega}{\textsc{GiVA}}
\newcommand{\papertitle}{Gradient-Informed Bases for Vector-Based Adaptation}
\newcommand{\initvrur}[1][\tiny]{{#1\textsc{TopSV-B}}}
\newcommand{\initvrutwor}[1][\tiny]{{#1\textsc{SecSV-B}}}
\newcommand{\initvr}[1][\tiny]{{#1\textsc{Rand-B}}}
\newcommand{\olmo}{OLMo~2}
\newcommand{\mistral}{Mistral}
\newcommand{\qwen}{Qwen~2}
\newcommand{\msphi}{Phi~3}
\newcommand{\dinob}{DINOv2-ViT-B/14}
\newcommand{\clipl}{CLIP-ViT-L/14}
\newcommand{\vegacolor}{gray!20}
\newcommand{\pt}{\textrm{pt}}
\newcommand{\ovector}{\Gamma}
\newcommand{\svector}{\Lambda}

\DeclareMathOperator*{\argmin}{arg\,min}

\newtheorem{theorem}{Theorem}

\begin{document}

%

%
\runningauthor{Gangwar, Deshmukh, Shavlovsky, Li, Mittal, Ying, Kani}

\twocolumn[

\aistatstitle{\vega{}: \papertitle{}}

\aistatsauthor{Neeraj Gangwar\textsuperscript{\normalfont \textdagger}
\quad
Rishabh Deshmukh\textsuperscript{\normalfont \S}
\quad
Michael Shavlovsky\textsuperscript{\normalfont \S}
\quad
Hancao Li\textsuperscript{\normalfont \S}}
\aistatsauthor{Vivek Mittal\textsuperscript{\normalfont \S}
\quad
Lexing Ying\textsuperscript{\normalfont $\mathparagraph$}
\quad
Nickvash Kani\textsuperscript{\normalfont  \textdagger}\vspace{0.5em}}

\aistatsaddress{\textsuperscript{\textdagger}University of Illinois Urbana-Champaign
\quad
\textsuperscript{\S}Amazon
\quad
\textsuperscript{$\mathparagraph$}Stanford University
\\
{\small \texttt{gangwar2@illinois.edu}, \texttt{derishab@amazon.com}}}
]

\begin{abstract}
\hyphenpenalty=2000
As model sizes continue to grow, parameter-efficient fine-tuning has emerged as a powerful alternative to full fine-tuning. While LoRA is widely adopted among these methods, recent research has explored vector-based adaptation methods due to their extreme parameter efficiency. However, these methods typically require substantially higher ranks than LoRA to match its performance, leading to increased training costs. This work introduces \vega{}, a gradient-based initialization strategy for vector-based adaptation. It achieves training times comparable to LoRA and maintains the extreme parameter efficiency of vector-based adaptation. We evaluate \vega{} across diverse benchmarks, including natural language understanding, natural language generation, and image classification. Experiments show that our approach consistently outperforms or achieves performance competitive with existing vector-based adaptation methods and LoRA while reducing rank requirements by a factor of eight ($8\times$).
\end{abstract}
\section{INTRODUCTION}\label{sec:introduction}
Large language models (LLMs) have demonstrated remarkable success through large-scale pre-training, enabling them to learn rich representations of language and general world knowledge \citep[e.g.,][]{olmo20242, liu2024deepseek, team2024gemma, achiam2023gpt, team2024qwen2, Abdin2024Phi3TR}. However, to fully leverage their capabilities in real-world applications, these models typically need to be fine-tuned to adapt them effectively to various complex downstream tasks, such as mathematical and commonsense reasoning, code understanding and generation, etc. Fully fine-tuning these models for each application is extremely costly, especially as model sizes continue to grow, making it computationally expensive. Moreover, each fine-tuned copy requires the same amount of storage as the original model, which is further amplified by the need to store multiple checkpoints.

To address these challenges, parameter-efficient fine-tuning (PEFT) has emerged as a powerful alternative, offering a better trade-off between the computational cost and performance \citep[e.g.,][]{Hu2021LoRALA, li2021prefix, lester2021power, houlsby2019parameter, sung2021training}. Recent extensions have further advanced this line of work \citep[e.g.,][]{zhang2023adalora, liu2024dora, kopiczko2024vera, albert2025randlora}. Among these methods, LoRA \citep{Hu2021LoRALA} has become one of the most widely adopted approaches for adapting LLMs to various downstream tasks. It leverages the observation from \citet{aghajanyan2021intrinsic} that parameter updates often lie in a low-dimensional subspace and models them as a product of two low-rank matrices, significantly reducing the number of trainable parameters.

\begin{figure*}[t]
    \centering
    \scalebox{0.8}{\begin{tikzpicture}[font={\fontfamily{phv}\selectfont\footnotesize}, text=darkgray!80, draw=darkgray!80]
    \node [draw, fill=Cerulean!60, minimum height=2.8cm, minimum width=2.8cm, inner sep=0, outer sep=0] (PRETRAINED) {\begin{tabular}{c}
            \\
            \textcolor{white}{$W_\textrm{\tiny residual} \in \mathbb{R}^{m \times d}$} \\
            \\
            \textcolor{white}{\SnowflakeChevron}
        \end{tabular}
    };

    \node [draw, fill=Cerulean!60, trapezium, trapezium stretches=true, trapezium angle=60, minimum width=2.8cm, minimum height=1cm, anchor=south west, inner sep=0, outer sep=0] (A) at ($(PRETRAINED.south east) + (1.2cm, 0cm)$) {\textcolor{white}{$A$ \raisebox{-0.2\height}\SnowflakeChevron}};

    \node [draw, fill=BurntOrange!60, minimum width=1.8cm, minimum height=0.25cm, label=right:$\svector$, anchor=south, inner sep=0, outer sep=0] (S) at ($(A.north) + (0cm, 0.1cm)$) {};

    \node [draw, fill=Cerulean!60, trapezium, trapezium stretches=true, trapezium angle=120, minimum width=2.8cm, minimum height=1cm, anchor=south, inner sep=0, outer sep=0] (B) at ($(S.north) + (0cm, 0.1cm)$) {\textcolor{white}{$B$ \raisebox{-0.2\height}\SnowflakeChevron}};

    \node [draw, fill=BurntOrange!60, minimum width=2.8cm, minimum height=0.25cm, label=right:$\ovector$, anchor=south, inner sep=0, outer sep=0] (O) at ($(B.north) + (0cm, 0.1cm)$) {};

    \node [draw, fill=Goldenrod!60, minimum width=1.74cm, minimum height=0.2cm, label=left:$x \in \mathbb{R}^{d}$] (X) at ($(PRETRAINED.south)!0.5!(A.south) + (0cm, -0.75cm)$) {};

    \node [draw, fill=Goldenrod!60, minimum width=1.74cm, minimum height=0.2cm, label=left:$h \in \mathbb{R}^{m}$] (H) at ($(PRETRAINED.north)!0.5!(O.north) + (0, 0.75cm)$) {};

    \node [below=of H, below=0.1cm] {\bf $+$};

    \draw [-{Triangle[width=8pt, length=5pt, line width=0.5pt]}, line width=5pt] ($(X.north west) + (0, 0.05cm)$) -- ($(PRETRAINED.south) - (0, 0.05cm)$);

    \draw [-{Triangle[width=8pt, length=5pt, line width=0.5pt]}, line width=5pt] ($(X.north east) + (0, 0.05cm)$) -- ($(A.south) - (0, 0.05cm)$);

    \draw [-{Triangle[width=8pt, length=5pt, line width=0.5pt]}, line width=5pt] ($(PRETRAINED.north) + (0, 0.1cm)$) -- ($(H.south west) - (0, 0.05cm)$);

    \draw [-{Triangle[width=8pt, length=5pt, line width=0.5pt]}, line width=5pt] ($(O.north) + (0, 0.1cm)$) -- ($(H.south east) - (0, 0.05cm)$);

    \node [draw, fill=Cerulean!60, minimum height=0.2cm, minimum width=0.2cm, , label={[font=\fontfamily{phv}\selectfont\scriptsize]right:Frozen}] (LABEL_FROZEN) at ($(A.south east) + (0cm, -0.5cm)$) {};

    \node [draw, fill=BurntOrange!60, minimum height=0.2cm, minimum width=0.2cm, , label={[font=\fontfamily{phv}\selectfont\scriptsize]right:Trainable}, below=of LABEL_FROZEN, below=0.1cm] {};

    \node [right=of S, right=1.5cm, yshift=1.25cm] (VERA) {
        \begin{tabular}{l}
        \textcolor{darkgray!90}{\textbf{VeRA}} \\
        $A, B \sim \operatorname{Kaiming}$, \\
        $\svector \leftarrow \mathbf{1}$, \\
        $\ovector \leftarrow \mathbf{0}$, \\
        $W_\textrm{residual} \leftarrow W_\pt$ \\
        \end{tabular}
    };

    \node [right=of VERA, right=0.0cm] (OSORA) {
        \begin{tabular}{l}
        \textcolor{darkgray!90}{\textbf{OSoRA}} \\
        $U, \Sigma, V \leftarrow \operatorname{SVD}(W_\pt)$ \\
        $B \leftarrow U_r, A \leftarrow V_r^T$ \\
        $\svector \leftarrow \Sigma_r, \ovector \leftarrow \mathbf{1}$ \\
        $W_\textrm{residual} \leftarrow W_\pt - U_r \Sigma_r V_r^T$
        \end{tabular}
    };

    \node [below=of VERA.south west, anchor=north west, yshift=0.75cm] (VEGA) {
        \begin{tabular}{l}
        \textcolor{darkgray!90}{\textbf{\vega{}} (ours)} \\
        $U, \Sigma, V \leftarrow \operatorname{SVD}\left( \nabla_W \mathcal{L}\left(W_\pt\right) \right)$ \\
        $A \leftarrow V_r^T, B^T B = \mathbb{I}^{r \times r}$ \\
        $\svector \leftarrow \mathbf{1}, \ovector \leftarrow \mathbf{0}$ \\
        $W_\textrm{residual} \leftarrow W_\pt$ \\
        \end{tabular}
    };
    
\end{tikzpicture}}
    \caption{Overview of vector-based adaptation methods. Here, $W_\pt$ denotes the pre-trained weights, $U_r$ and $V_r$ denote the first $r$ columns of $U$ and $V$, respectively, and $\Sigma_r$ corresponds to the top-left $r \times r$ submatrix of $\Sigma$.}
    \label{fig:overview}
\end{figure*}
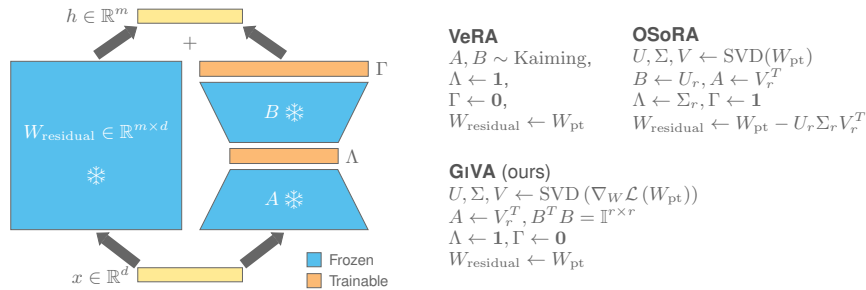

Vector-based adaptation methods (Figure~\ref{fig:overview}) extend this idea by introducing scaling vectors that adapt a pair of frozen matrices (also referred to as bases) to guide the model's behavior \citep{kopiczko2024vera, Han2025OSoRAOA}. Since only the scaling vectors are trained, they reduce the number of trainable parameters to an even greater extent than LoRA-like approaches, making them extremely parameter- and storage-efficient. This efficiency is particularly appealing in resource-constrained applications, such as scenarios where model updates must be communicated over a network (e.g., federated learning or multi-device fine-tuning), and in mixture-of-experts \citep[e.g.,][]{wu2024mixture}. However, to match LoRA's performance, these methods typically require substantially higher ranks, which leads to considerably longer training times as model sizes increase. For instance, fine-tuning \qwen{} (0.5B) on 15K commonsense reasoning examples from \citet{hu2023llm} using VeRA \citep{kopiczko2024vera} requires approximately $2.5\times$ the runtime of LoRA to achieve comparable performance (see Section~\ref{sec:training_time_comparison} for details). This additional overhead is primarily due to VeRA's higher rank—1024 versus 16 in LoRA. Motivated by this, we focus on the following question:
\begin{center}
\textit{Can we reduce the required rank in vector-based adaptation methods to achieve training times comparable to LoRA while preserving their performance?} 
\end{center}

VeRA \citep{kopiczko2024vera} uses a pair of random matrices shared across layers as its bases, whereas OSoRA \citep{Han2025OSoRAOA} derives the bases directly from pre-trained weights. As these bases are not updated during fine-tuning and contain no information about the downstream task, both methods require higher ranks than LoRA to achieve comparable performance. The choice of bases may be particularly suboptimal for OSoRA, as the reliance on pre-trained weights makes it less flexible than VeRA. We hypothesize that carefully choosing the bases may reduce the required ranks while achieving performance comparable to these methods. To this end, we propose \MakeLowercase{\papertitle{}} (\vega{}). Our approach constructs its bases using the first-step full fine-tuning gradient for the downstream task. Specifically, the bases are chosen such that the first-step gradient of \vega{} closely approximates that of full fine-tuning. This strategy is analogous to those adopted in recent methods for improving LoRA \citep{wang2024lora, wang2025lorapro}. However, unlike these approaches, we fine-tune only the scaling vectors, thereby preserving the extreme parameter efficiency of vector-based adaptation methods.

We extensively evaluate \vega{} across natural language understanding, natural language generation, and image classification tasks. Our experiments demonstrate that it consistently outperforms or achieves performance competitive with existing vector-based adaptation methods and LoRA while operating with substantially lower ranks. As a result, it attains runtimes on par with LoRA and requires an extremely small number of trainable parameters.


\paragraph{Contributions.}
The contributions of our work are as follows:
\vspace{-0.75em}
\begin{itemize}
    \item We introduce \vega{}, a vector-based adaptation method that derives the bases from the first-step full fine-tuning gradient, enabling efficient adaptation with reduced rank requirements. The reduced rank requirements result in training times comparable to those of LoRA-like methods.

    \item We evaluate our approach extensively across various benchmarks and model sizes, which show that \vega{} outperforms or achieves performance competitive with existing methods.

    \item We examine three alternative strategies for initializing the bases, each leading to similar performance across benchmarks.
\end{itemize}

\vspace{-0.75em}

Our code is publicly available on GitHub.\footnote{\url{https://github.com/neerajgangwar/giva}}

\section{RELATED WORKS}
\paragraph{Parameter Efficient Fine-Tuning.}
Early PEFT methods introduce a small set of trainable parameters and keep the original model frozen during training. One line of work inserts adapter modules between different layers of a pre-trained model \citep{houlsby2019parameter, lin2020exploring, pfeiffer2021adapterfusion}, while other methods focus on adding learnable prompts to the model input. For example, AutoPrompt \citep{shin2020autoprompt} searches for a token sequence that elicits the desired behavior. Prompt Tuning \citep{lester2021power} and Prefix Tuning \citep{li2021prefix} extend this idea by learning continuous prefix embeddings rather than discrete tokens. Although effective, these methods increase the inference latency due to the additional layers or tokens. LoRA \citep{Hu2021LoRALA} addresses this limitation by modeling the parameter updates as a product of two low-rank matrices, which can be merged back into the base model after training, thus eliminating any additional inference cost. Several works have proposed alternative initialization schemes for LoRA \citep{buyukakyuz2024olora, yang2024corda, meng2024pissa}. Methods like AdaLoRA \citep{zhang2023adalora}, EVA \citep{paischer2024parameter}, GoRA \citep{he2025gora}, and GeLoRA \citep{ed2024gelora} decide the ranks for different layers in the model dynamically rather than using a fixed rank across layers. To achieve stable learning, LoRA+ \citep{hayou2024loraplus} uses different learning rates for the low-rank matrices, while \citet{zhang2024riemannian} propose the use of Riemannian preconditioning. Methods such as FLoRA \citep{hao_flora_2024} and XGBLoRA \citep{zhang2024less}, among others, have proposed full-rank variants of LoRA. Another line of research in parameter-efficient methods focuses on preserving the hyperspherical energy during fine-tuning. For example, Orthogonal Fine-Tuning \citep[OFT;][]{qiu2023controlling} targets text-to-image tasks. BOFT \citep{liu2024parameter} and MOFT \citep{wu2025memory} are its parameter- and memory-efficient extensions.


\paragraph{Improved Parameter Efficiency.}
Several methods have improved parameter efficiency by introducing learnable scaling vectors in LoRA-inspired frameworks. For example, NoLA \citep{koohpayegani2024nola} represents LoRA's low-rank matrices as a weighted combination of random bases, training only the weights. This strategy achieves comparable performance to LoRA while substantially reducing trainable parameters. VeRA \citep{kopiczko2024vera} employs trainable scaling vectors to adapt a pair of randomly initialized matrices to control the model behavior. Instead of using random bases, OSoRA \citep{Han2025OSoRAOA} derives its bases from pre-trained weights. RandLoRA \citep{albert2025randlora} extends this strategy to support full-rank updates while preserving parameter efficiency. Differing from these methods, LoRA-XS \citep{balazy2024lora} introduces a trainable weight matrix between the frozen low-rank matrices, which are derived from pre-trained weights.


\paragraph{Leveraging Gradients in PEFT.}
Recent studies have leveraged the full fine-tuning gradient for better initialization of the low-rank matrices. LoRA-GA \citep{wang2024lora} and LoRA-One \citep{zhang2025loraone} use the first-step full fine-tuning gradient for initialization and achieve faster convergence across tasks. LoRA-SB \citep{ponkshe2024initialization} extends LoRA-XS and uses gradients to initialize the frozen low-rank matrices. LoRA-Pro \citep{wang2025lorapro} adjusts the gradients of the low-rank matrices to better approximate those of full fine-tuning.

\section{METHOD}
\label{sec:method}
\subsection{Preliminaries}
For a pre-trained weight matrix $W_\pt \in \mathbb{R}^{m \times d}$, vector-based adaptation methods parameterize the weight update $\Delta W$ and the updated weight matrix $W^\prime \in \mathbb{R}^{m \times d}$ as
\begin{equation}
\label{eq:formulation}
\begin{split}
    \Delta W &= \ovector B \svector A \\
    W^\prime &= W_\pt + \Delta W = W_\pt + \ovector B \svector A
\end{split}
\end{equation}
Here, $B \in \mathbb{R}^{m \times r}$ and $A \in \mathbb{R}^{r \times d}$ are low-rank matrices; further $\ovector = \operatorname{diag}(\gamma_1 \dots \gamma_m)$ and $\svector = \operatorname{diag}(\lambda_1 \dots \lambda_r)$. Rank $r$ is generally chosen such that $r < \operatorname{min}(m, d)$. During training, only the diagonal matrices are updated, resulting in $m + r$ trainable parameters. Hence, these methods are extremely parameter-efficient.

VeRA \citep{kopiczko2024vera} initializes $A$ and $B$ as random matrices shared across layers, while OSoRA \citep{Han2025OSoRAOA} derives $A$ and $B$ from the singular-value decomposition (SVD) of $W_\pt$.

\subsection{Proposed Approach}
In this section, we introduce \vega{}. We hypothesize that carefully chosen $A$ and $B$ may allow for smaller values of $r$ than existing vector-based adaptation methods while achieving comparable performance. 

We employ a gradient-based initialization for $A$ and $B$. In our method, $A$ and $B$ are chosen such that the first-step update closely approximates the first-step full fine-tuning update.

\paragraph{First-Step Update.}
The weight update $\Delta W$ in \eqref{eq:formulation} can be written as
\begin{equation*}
    \begin{split}
    \Delta W &= (\ovector_0 + \Delta \ovector) B (\svector_0 + \Delta \svector) A \\
    &= \ovector_0 B \svector_0 A + \ovector_0 B \Delta \svector A + \Delta \ovector B \svector_0 A + \Delta \ovector B \Delta \svector A
    \end{split}
\end{equation*}
Here, a subscript of zero denotes the initial value of a variable. We use $\ovector_0 = \mathbf{0}^{m \times m}$ and $\svector_0 = \mathbb{I}^{r \times r}$, which reduces $\Delta W$ to
\begin{equation}
\label{eq:delta_w_simplified}
    \Delta W = \Delta \ovector B A + \Delta \ovector B \Delta \svector A
\end{equation}

The first-step update can be computed by substituting $\Delta \ovector = \eta \nabla_{\ovector} \mathcal{L} (\ovector_0)$ and $\Delta \svector = \eta \nabla_{\svector} \mathcal{L} (\svector_0)$. Here, $\eta$ and $\mathcal{L}$ denote the learning rate and the loss function, respectively. The second term in \eqref{eq:delta_w_simplified} contains two delta terms, resulting in a factor of $\eta^2$ and a product of two gradients. We ignore this term and rewrite the first-step update as
\begin{equation*}
    \eta \nabla_{\ovector} \mathcal{L} (\ovector_0) B A
\end{equation*}

\paragraph{Computing $A$ and $B$.}
Let $\eta \nabla_W \mathcal{L} (W_\pt)$ denote the first-step full fine-tuning update. We choose $A$ and $B$ such that 
\begin{equation}
    \label{eq:optim}
    \argmin_{A, B} \Vert \eta \nabla_{\ovector} \mathcal{L} (\ovector_0) B A - \eta \nabla_W \mathcal{L} (W_\pt) \Vert_F
\end{equation}

$\nabla_{\ovector} \mathcal{L}$ can be written in terms of $\nabla_{W^\prime} \mathcal{L}$, resulting in
\begin{equation*}
\nabla_{\ovector} \mathcal{L} (\ovector_0) = \nabla_{W^\prime} \mathcal{L} (W_\pt) A^T B^T
\end{equation*}

As $\nabla_{W^\prime} \mathcal{L} (W_\pt) = \nabla_{W} \mathcal{L} (W_\pt)$ \citep[Lemma~3.1 from][]{wang2024lora}, the optimization problem in \eqref{eq:optim} can be rewritten as
\begin{equation}
    \label{eq:optim_rewrite}
    \argmin_{A, B} \Vert \nabla_{W} \mathcal{L} (W_\pt) A^T B^T B A - \nabla_W \mathcal{L} (W_\pt) \Vert_F
\end{equation}

\begin{tcolorbox}
\begin{theorem}
\label{theorem:solution}
The optimization problem in \eqref{eq:optim_rewrite} reaches its minimum when
\begin{equation*}
    A = V_{r}^T, B^T B = \mathbb{I}^{r \times r}
\end{equation*}
$V_r$ denotes the right singular vectors of $\nabla_W \mathcal{L} (W_\pt)$, corresponding to the $r$ largest singular values.
\end{theorem}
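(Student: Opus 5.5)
Write $G = \nabla_W \mathcal{L}(W_\pt) \in \mathbb{R}^{m \times d}$ and $M = A^T B^T B A \in \mathbb{R}^{d \times d}$. The objective in \eqref{eq:optim_rewrite} is then $\Vert G M - G \Vert_F$. The plan has three steps. First, show that $GM$ always has rank at most $r$. Second, invoke the Eckart--Young--Mirsky theorem to get a lower bound on the objective that holds for every admissible $(A, B)$. Third, check that the proposed choice $A = V_r^T$, $B^T B = \mathbb{I}^{r \times r}$ attains this bound.

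For the lower bound, note that $GM = (G A^T B^T B)A$ and $A \in \mathbb{R}^{r \times d}$, so $\operatorname{rank}(GM) \le r$. Let $G = U \Sigma V^T$ be the SVD, with singular values $\sigma_1 \ge \sigma_2 \ge \dots$. Eckart--Young--Mirsky in the Frobenius norm gives
\begin{equation*}
\Vert G M - G \Vert_F \;\ge\; \min_{\operatorname{rank}(X) \le r} \Vert X - G \Vert_F \;=\; \Big(\sum_{i > r} \sigma_i^2\Big)^{1/2}.
\end{equation*}
This bound holds for all $A$ and $B$ in the feasible set, not only for those satisfying the stated conditions.

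For attainment, substitute $A = V_r^T$ and $B^T B = \mathbb{I}^{r \times r}$. Then $M = V_r V_r^T$, the orthogonal projector onto the span of the top $r$ right singular vectors. Hence
\begin{equation*}
G V_r V_r^T = U \Sigma V^T V_r V_r^T = U_r \Sigma_r V_r^T,
\end{equation*}
using that $V^T V_r$ consists of the first $r$ columns of the identity. This is exactly the truncated SVD of $G$. The residual $G - U_r \Sigma_r V_r^T = \sum_{i > r} \sigma_i u_i v_i^T$ has Frobenius norm $(\sum_{i>r}\sigma_i^2)^{1/2}$, which equals the lower bound. So the stated configuration is a minimizer.

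There is no real obstacle here; the only subtlety is the rank argument in the first step. One point to note when writing it up: the theorem claims that this configuration reaches the minimum, not that it is the unique minimizer. Other choices also attain the bound, for example $A = R V_r^T$ with $B$ adjusted so that $A^T B^T B A$ is still the same projector, and when $\sigma_r = \sigma_{r+1}$ the top-$r$ subspace is not unique. The proof should therefore claim sufficiency only. It should also state that $B^T B = \mathbb{I}$ requires $r \le m$, which holds by the standing assumption $r < \min(m,d)$.
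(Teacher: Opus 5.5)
Your proposal is correct and follows essentially the same route as the paper's proof: you bound $\operatorname{rank}(G A^T B^T B A)\le r$, invoke Eckart--Young, and verify by substitution that $G V_r V_r^T = U_r\Sigma_r V_r^T$ is the truncated SVD. Your version is slightly more explicit than the paper's, since it states the lower bound $\bigl(\sum_{i>r}\sigma_i^2\bigr)^{1/2}$ and notes that the stated conditions are sufficient for a minimizer rather than characterizing a unique one, but the argument is the same.
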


\begin{proof}
See the supplementary material.
\end{proof}
\end{tcolorbox}

Using Theorem~\ref{theorem:solution}, we initialize the training formulation in \eqref{eq:formulation} as follows
\begin{equation}
\label{eq:init_sol}
\begin{split}
    &\ovector = \mathbf{0}^{m \times m}, \\
    &\svector = \mathbb{I}^{r \times r}, \\
    &A = V_{r}^T, \; \text{and}\\
    &B \; \text{such that} \; B^T B = \mathbb{I}^{r \times r}
\end{split}
\end{equation}

\paragraph{Efficiency Trade-Offs.}
In contrast to existing vector-based adaptation methods, our approach introduces some additional overheads: (1) storage, since matrices $A$ and $B$ must be saved, (2) gradient computation, and (3) performing SVD to obtain $A$ and $B$. The low-rank matrices need to be stored only once per task, and subsequent checkpoints remain as lightweight as those in existing vector-based adaptation methods. To keep the training pipeline efficient, we compute the first-step full fine-tuning gradient using a single batch of examples and use the low-rank SVD algorithm.\footnote{PyTorch implementation: \href{https://docs.pytorch.org/docs/stable/generated/torch.svd\_lowrank.html}{\texttt{torch.svd\_lowrank}}}

\section{EXPERIMENTS}
We conduct extensive experiments to evaluate \vega{} on natural language understanding (Section~\ref{sec:nlu}), natural language generation (Section~\ref{sec:nlg}), and image classification (Section~\ref{sec:image_classification}). The corresponding sections provide details about the models and datasets used. We conclude by discussing the training times for different adaptation methods in Section~\ref{sec:training_time_comparison}.

For our baselines, we primarily consider vector-based adaptation methods: VeRA \citep{kopiczko2024vera}, OSoRA \citep{Han2025OSoRAOA}, and RandLoRA \citep{albert2025randlora}. We additionally include LoRA \citep{Hu2021LoRALA}, which serves as a widely accepted reference point in PEFT methods. While not all baselines are evaluated across every dataset, we include each method for all tasks where results were reported in its original publication.

\begin{table*}[t]
    \caption{Performance of different adaptation methods on GLUE. We report Matthew's correlation for CoLA, Pearson's correlation for STS-B, and accuracy for the remaining tasks. Experiments are repeated three times, and the averages are reported.}
    \label{tab:glue_results}
    \begin{center}
    \begin{small}
    \begin{tabular}{lrccccccc}
    \toprule
    \multirow{2}{*}{\textbf{Method}} & \textbf{Params} & \textbf{SST-2} & \textbf{MRPC} & \textbf{CoLA} & \textbf{QNLI} & \textbf{RTE} & \textbf{STS-B} & \multirow{2}{*}{\textbf{Avg.}} \\
    {} & (M) & (Acc.) & (Acc.) & (MCC) & (Acc.) & (Acc.) & (PCC) & {} \\
    \midrule
    \multicolumn{9}{c}{\textit{RoBERTa-Base}} \\
    \midrule
    Full Fine-Tuning & 124.06 & 94.8{\tiny $\pm$0.2} & 89.6{\tiny $\pm$0.7} & 62.1{\tiny $\pm$0.8} & 92.8{\tiny $\pm$0.1} & 75.8{\tiny $\pm$1.4} & 90.9{\tiny $\pm$0.1} & 84.3 \\
    LoRA ($r = 4$) & 0.66 & 94.7{\tiny $\pm$0.1} & 87.7{\tiny $\pm$0.6} & 59.2{\tiny $\pm$0.8} & 92.7{\tiny $\pm$0.1} & 74.8{\tiny $\pm$0.6} & 90.5{\tiny $\pm$0.0} & 83.3 \\
    VeRA ($r = 1024$) & 0.16 & 94.5{\tiny $\pm$0.1} & 88.2{\tiny $\pm$0.9} & 62.3{\tiny $\pm$0.5} & 92.5{\tiny $\pm$0.1} & 73.6{\tiny $\pm$1.3} & 90.6{\tiny $\pm$0.1} & 83.6 \\
    RandLoRA ($r = 64$) & 0.72 & 94.9{\tiny $\pm$0.1} & 88.8{\tiny $\pm$0.4} & 62.6{\tiny $\pm$0.9} & 92.8{\tiny $\pm$0.2} & 74.8{\tiny $\pm$1.5} & 90.6{\tiny $\pm$0.1} & 84.1 \\
    \rowcolor{\vegacolor}
    \vega{} ($r = 8$, \initvrur{}) & 0.08 & 94.2{\tiny $\pm$0.2} & 88.7{\tiny $\pm$0.5} & 59.1{\tiny $\pm$0.8} & 92.3{\tiny $\pm$0.2} & 75.5{\tiny $\pm$3.9} & 90.4{\tiny $\pm$0.1} & 83.4 \\
    \rowcolor{\vegacolor}
    \vega{} ($r = 8$, \initvrutwor{}) & 0.08 & 94.2{\tiny $\pm$0.2} & 89.4{\tiny $\pm$0.9} & 60.0{\tiny $\pm$2.3} & 92.2{\tiny $\pm$0.2} & 73.4{\tiny $\pm$1.9} & 90.5{\tiny $\pm$0.0} & 83.3 \\
    \rowcolor{\vegacolor}
    \vega{} ($r = 8$, \initvr{}) & 0.08 & 94.5{\tiny $\pm$0.2} & 89.3{\tiny $\pm$1.6} & 61.6{\tiny $\pm$0.3} & 92.3{\tiny $\pm$0.1} & 74.8{\tiny $\pm$3.0} & 90.5{\tiny $\pm$0.2} & 83.8 \\
    \midrule
    \multicolumn{9}{c}{\textit{RoBERTa-Large}} \\
    \midrule
    Full Fine-Tuning & 354.31 & 96.4{\tiny $\pm$0.1} & 89.5{\tiny $\pm$0.6} & 67.5{\tiny $\pm$1.5} & 94.7{\tiny $\pm$0.1} & 84.7{\tiny $\pm$1.8} & 92.1{\tiny $\pm$0.1} & 87.5 \\
    LoRA ($r = 4$) & 1.77 & 96.2{\tiny $\pm$0.2} & 89.7{\tiny $\pm$0.4} & 66.6{\tiny $\pm$0.5} & 94.7{\tiny $\pm$0.0} & 85.2{\tiny $\pm$1.0} & 92.2{\tiny $\pm$0.1} & 87.4 \\
    VeRA ($r = 256$) & 0.26 & 96.1{\tiny $\pm$0.1} & 89.8{\tiny $\pm$0.4} & 68.6{\tiny $\pm$1.8} & 94.8{\tiny $\pm$0.2} & 83.2{\tiny $\pm$2.2} & 92.0{\tiny $\pm$0.2} & 87.4 \\
    RandLoRA ($r = 100$) & 1.78 & 96.5{\tiny $\pm$0.5} & 90.1{\tiny $\pm$0.4} & 67.7{\tiny $\pm$1.4} & 94.7{\tiny $\pm$0.3} & 85.3{\tiny $\pm$0.9} & 92.3{\tiny $\pm$0.2} & 87.8 \\
    \rowcolor{\vegacolor}
    \vega{} ($r = 8$, \initvrur{}) & 0.22 & 96.2{\tiny $\pm$0.2} & 90.3{\tiny $\pm$0.7} & 67.5{\tiny $\pm$1.0} & 94.7{\tiny $\pm$0.1} & 84.5{\tiny $\pm$1.1} & 91.6{\tiny $\pm$0.5} & 87.5 \\
    \rowcolor{\vegacolor}
    \vega{} ($r = 8$, \initvrutwor{}) & 0.22 & 96.1{\tiny $\pm$0.4} & 89.4{\tiny $\pm$0.4} & 68.0{\tiny $\pm$2.2} & 95.0{\tiny $\pm$0.1} & 82.3{\tiny $\pm$2.8} & 91.6{\tiny $\pm$0.2} & 87.1 \\
    \rowcolor{\vegacolor}
    \vega{} ($r = 8$, \initvr{}) & 0.22 & 96.1{\tiny $\pm$0.3} & 89.8{\tiny $\pm$1.2} & 67.0{\tiny $\pm$0.9} & 94.7{\tiny $\pm$0.2} & 85.3{\tiny $\pm$0.6} & 92.0{\tiny $\pm$0.4} & 87.5 \\
    \bottomrule
    \end{tabular}
    \end{small}
    \end{center}
\end{table*}
\begin{table*}[t]
    \caption{Performance of different adaptation methods on commonsense reasoning datasets. We report accuracy for all tasks. The percentages are computed relative to the total number of parameters. Experiments are repeated three times, and their averages are reported.}
    \label{tab:commonsense_results}
    \vspace{0.5em}
    \setlength{\tabcolsep}{3pt}
    \begin{center}
    \begin{small}
    \begin{tabular}{lcccccccccc}
    \toprule
    \multirow{2}{*}{\textbf{Method}} & \textbf{Params} & \textbf{BoolQ} & \textbf{PIQA} & \textbf{SIQA} & \textbf{HS} & \textbf{WG} & \textbf{ARC-e} & \textbf{ARC-c} & \textbf{OBQA} & \multirow{2}{*}{\textbf{Avg.}} \\
    {} & (\%) & \multicolumn{8}{c}{(Acc.)} & {} \\
    \midrule
    \multicolumn{11}{c}{\textit{Qwen2-0.5B-Instruct}} \\
    \midrule
    LoRA ($r = 16$) & 1.180 & 62.0{\tiny $\pm$0.3} & 66.1{\tiny $\pm$1.6} & 61.3{\tiny $\pm$0.1} & 54.0{\tiny $\pm$2.1} & 51.2{\tiny $\pm$1.8} & 66.6{\tiny $\pm$0.2} & 49.1{\tiny $\pm$2.3} & 60.7{\tiny $\pm$1.0} & 58.9 \\
    VeRA ($r = 1024$) & 0.058 & 62.4{\tiny $\pm$0.4} & 65.5{\tiny $\pm$0.2} & 60.5{\tiny $\pm$0.4} & 47.6{\tiny $\pm$1.0} & 53.3{\tiny $\pm$1.3} & 64.6{\tiny $\pm$0.5} & 45.7{\tiny $\pm$0.7} & 57.1{\tiny $\pm$1.4} & 57.1 \\
    OSoRA ($r = 1024$) & 0.048 & 62.0{\tiny $\pm$0.3} & 62.0{\tiny $\pm$0.6} & 58.5{\tiny $\pm$1.0} & 40.4{\tiny $\pm$1.3} & 50.9{\tiny $\pm$0.8} & 62.7{\tiny $\pm$0.6} & 43.9{\tiny $\pm$0.8} & 57.3{\tiny $\pm$1.0} & 54.7 \\
    RandLoRA ($r = 10$) & 1.191 & 61.4{\tiny $\pm$1.8} & 65.6{\tiny $\pm$1.7} & 57.6{\tiny $\pm$1.9} & 44.1{\tiny $\pm$5.5} & 50.2{\tiny $\pm$0.4} & 64.6{\tiny $\pm$0.9} & 45.9{\tiny $\pm$2.9} & 55.8{\tiny $\pm$2.8} & 55.7 \\
    \rowcolor{\vegacolor}
    \vega{} ($r = 64$, \initvrur{}) & 0.035 & 62.6{\tiny $\pm$0.3} & 66.2{\tiny $\pm$0.7} & 59.3{\tiny $\pm$0.2} & 48.8{\tiny $\pm$0.6} & 51.6{\tiny $\pm$0.5} & 66.4{\tiny $\pm$0.1} & 47.6{\tiny $\pm$1.6} & 54.7{\tiny $\pm$0.4} & 57.1 \\
    \rowcolor{\vegacolor}
    \vega{} ($r = 64$, \initvrutwor{}) & 0.035 & 61.7{\tiny $\pm$1.0} & 66.0{\tiny $\pm$0.8} & 60.2{\tiny $\pm$1.0} & 50.4{\tiny $\pm$0.7} & 50.6{\tiny $\pm$0.5} & 65.8{\tiny $\pm$0.6} & 48.5{\tiny $\pm$0.3} & 57.8{\tiny $\pm$1.0} & 57.6 \\
    \rowcolor{\vegacolor}
    \vega{} ($r = 64$, \initvr{}) & 0.035 & 62.2{\tiny $\pm$0.0} & 65.6{\tiny $\pm$0.7} & 60.4{\tiny $\pm$0.4} & 50.7{\tiny $\pm$1.3} & 51.9{\tiny $\pm$0.5} & 65.7{\tiny $\pm$1.0} & 48.7{\tiny $\pm$0.5} & 55.7{\tiny $\pm$0.9} & 57.6 \\
    \midrule
    \multicolumn{11}{c}{\textit{Phi3-Mini}} \\
    \midrule
    LoRA ($r = 16$) & 0.573 & 69.9{\tiny $\pm$0.6} & 86.4{\tiny $\pm$0.1} & 77.2{\tiny $\pm$0.5} & 90.4{\tiny $\pm$0.5} & 80.4{\tiny $\pm$0.3} & 96.1{\tiny $\pm$0.4} & 88.1{\tiny $\pm$0.5} & 87.7{\tiny $\pm$0.6} & 84.5 \\
    VeRA ($r = 1024$) & 0.027 & 69.2{\tiny $\pm$1.0} & 85.8{\tiny $\pm$0.2} & 76.5{\tiny $\pm$0.1} & 89.1{\tiny $\pm$1.7} & 79.4{\tiny $\pm$1.2} & 95.5{\tiny $\pm$0.4} & 86.9{\tiny $\pm$0.3} & 86.1{\tiny $\pm$1.6} & 83.6 \\
    OSoRA ($r = 1024$) & 0.027 & 68.4{\tiny $\pm$0.5} & 85.8{\tiny $\pm$0.5} & 77.8{\tiny $\pm$0.2} & 88.7{\tiny $\pm$0.4} & 78.2{\tiny $\pm$0.1} & 95.7{\tiny $\pm$0.2} & 87.1{\tiny $\pm$0.3} & 88.9{\tiny $\pm$1.2} & 83.8 \\
    RandLoRA ($r = 40$) & 0.598 & 69.8{\tiny $\pm$0.1} & 86.7{\tiny $\pm$0.3} & 76.5{\tiny $\pm$0.6} & 90.1{\tiny $\pm$1.0} & 80.1{\tiny $\pm$0.2} & 95.6{\tiny $\pm$0.3} & 87.7{\tiny $\pm$0.3} & 87.1{\tiny $\pm$0.8} & 84.2 \\
    \rowcolor{\vegacolor}
    \vega{} ($r = 64$, \initvrur{}) & 0.024 & 68.5{\tiny $\pm$0.8} & 87.0{\tiny $\pm$0.3} & 76.3{\tiny $\pm$1.0} & 90.1{\tiny $\pm$0.8} & 78.8{\tiny $\pm$0.6} & 95.6{\tiny $\pm$0.1} & 87.6{\tiny $\pm$0.5} & 88.1{\tiny $\pm$1.6} & 84.0 \\
    \rowcolor{\vegacolor}
    \vega{} ($r = 64$, \initvrutwor{}) & 0.024 & 69.4{\tiny $\pm$0.2} & 85.4{\tiny $\pm$0.3} & 76.8{\tiny $\pm$0.1} & 89.7{\tiny $\pm$0.9} & 79.0{\tiny $\pm$0.9} & 95.3{\tiny $\pm$0.2} & 87.1{\tiny $\pm$0.7} & 87.5{\tiny $\pm$0.1} & 83.8 \\
    \rowcolor{\vegacolor}
    \vega{} ($r = 64$, \initvr{}) & 0.024 & 68.5{\tiny $\pm$1.1} & 86.3{\tiny $\pm$0.5} & 76.0{\tiny $\pm$1.1} & 89.5{\tiny $\pm$1.5} & 78.0{\tiny $\pm$1.2} & 95.5{\tiny $\pm$0.2} & 87.4{\tiny $\pm$0.3} & 86.7{\tiny $\pm$1.8} & 83.5 \\
    \midrule
    \multicolumn{11}{c}{\textit{OLMo2-7B}} \\
    \midrule
    LoRA ($r = 16$) & 0.383 & 71.0{\tiny $\pm$0.2} & 86.7{\tiny $\pm$0.1} & 78.4{\tiny $\pm$0.6} & 92.3{\tiny $\pm$0.4} & 80.0{\tiny $\pm$0.2} & 93.0{\tiny $\pm$0.2} & 83.0{\tiny $\pm$0.1} & 85.3{\tiny $\pm$0.5} & 83.7 \\
    VeRA ($r = 1024$) & 0.014 & 70.5{\tiny $\pm$0.8} & 85.8{\tiny $\pm$0.6} & 77.7{\tiny $\pm$0.4} & 91.0{\tiny $\pm$0.2} & 77.8{\tiny $\pm$0.6} & 92.7{\tiny $\pm$0.1} & 81.7{\tiny $\pm$0.5} & 85.3{\tiny $\pm$0.9} & 82.8 \\
    OSoRA ($r = 1024$) & 0.014 & 69.4{\tiny $\pm$1.1} & 85.7{\tiny $\pm$0.9} & 78.1{\tiny $\pm$0.7} & 90.8{\tiny $\pm$1.1} & 78.2{\tiny $\pm$1.3} & 92.0{\tiny $\pm$0.2} & 81.3{\tiny $\pm$0.4} & 85.3{\tiny $\pm$1.3} & 82.6 \\
    RandLoRA ($r = 100$) & 0.376 & 70.1{\tiny $\pm$1.6} & 86.3{\tiny $\pm$0.2} & 77.5{\tiny $\pm$0.8} & 89.7{\tiny $\pm$2.8} & 78.3{\tiny $\pm$2.9} & 92.8{\tiny $\pm$0.2} & 83.7{\tiny $\pm$0.1} & 85.9{\tiny $\pm$0.6} & 83.0 \\
    \rowcolor{\vegacolor}
    \vega{} ($r = 64$, \initvrur{}) & 0.012 & 71.0{\tiny $\pm$0.4} & 86.8{\tiny $\pm$0.3} & 78.3{\tiny $\pm$0.6} & 91.2{\tiny $\pm$0.6} & 78.7{\tiny $\pm$0.9} & 93.2{\tiny $\pm$0.4} & 82.5{\tiny $\pm$0.4} & 84.1{\tiny $\pm$0.9} & 83.2 \\
    \rowcolor{\vegacolor}
    \vega{} ($r = 64$, \initvrutwor{}) & 0.012 & 70.8{\tiny $\pm$1.1} & 86.1{\tiny $\pm$0.9} & 79.0{\tiny $\pm$0.6} & 91.3{\tiny $\pm$1.4} & 78.5{\tiny $\pm$1.3} & 93.1{\tiny $\pm$0.2} & 82.8{\tiny $\pm$0.5} & 84.7{\tiny $\pm$0.6} & 83.3 \\
    \rowcolor{\vegacolor}
    \vega{} ($r = 64$, \initvr{}) & 0.012 & 71.2{\tiny $\pm$0.5} & 86.7{\tiny $\pm$0.4} & 78.1{\tiny $\pm$1.1} & 91.0{\tiny $\pm$0.6} & 78.7{\tiny $\pm$0.7} & 92.7{\tiny $\pm$0.4} & 82.5{\tiny $\pm$0.7} & 84.4{\tiny $\pm$2.3} & 83.2 \\
    \bottomrule
    \end{tabular}
    \end{small}
    \end{center}
\end{table*}

We initialize $A$ following \eqref{eq:init_sol}. As $B$ may be initialized as any matrix with orthonormal columns, we consider the following initialization strategies:
\vspace{-0.75em}
\paragraph{\initvrur[\small].}
Left singular vectors of the first-step full fine-tuning gradient corresponding to the $r$ largest singular values. \citet{zhang2025loraone} use a similar strategy for LoRA.

\vspace{-0.75em}
\paragraph{\initvrutwor[\small].}
Left singular vectors of the first-step full fine-tuning gradient corresponding to the $r + 1$ to $2r$ largest singular values. \citet{wang2024lora} use a similar strategy for LoRA.

\vspace{-0.75em}
\paragraph{\initvr[\small].}
By performing a QR decomposition on a random matrix $X \sim \mathcal{N}(0, 1)^{m \times r}$.

\subsection{Natural Language Understanding}
\label{sec:nlu}

We begin by evaluating \vega{} on the General Language Understanding Evaluation (GLUE) benchmark \citep{wang2019glue}. Our experiments use both the base and large versions of RoBERTa \citep{liu2019roberta}. We generally follow the training setup of \citet{albert2025randlora} and additionally perform a learning rate sweep. Adapter modules are added to all linear projections within the attention and MLP layers. We initialize the training from pre-trained checkpoints and omit the MNLI-trick from \citet{Hu2021LoRALA} due to resource limitations.

Models are fine-tuned for 10 epochs and evaluated at the end of each epoch. We report the best validation performance. Further details on the training setup are provided in Section~\ref{sec:glue_training_details} of the supplementary material.

Table~\ref{tab:glue_results} shows the performance of various adaptation methods on the validation set. The reported number of trainable parameters does not include the classification heads. These results demonstrate that \vega{} achieves performance comparable to other methods. Notably, \vega{} outperforms VeRA and uses a rank of eight, substantially reducing the computational overhead during training. All three initializations for \vega{} yield comparable results on GLUE, but \initvr[\small] achieves the highest average performance across model sizes.

\subsection{Natural Language Generation}
\label{sec:nlg}
We next evaluate \vega{} on natural language generation tasks. For this purpose, we consider four benchmarks: (1) commonsense reasoning, (2) mathematical reasoning, (3) code generation, and (4) instruction tuning.

\paragraph{Commonsense Reasoning.}
\label{sec:commonsense_reasoning}
We fine-tune \qwen{} \citep[0.5B;][]{team2024qwen2}, \msphi{} \citep[3.8B;][]{Abdin2024Phi3TR}, and \olmo{} \citep[7B;][]{olmo20242} on a subset of 15K commonsense reasoning examples from \citet{hu2023llm}. We largely follow the training setup of \citet{albert2025randlora}, with the addition of a learning rate sweep. The official implementation of RandLoRA suggests that \citet{albert2025randlora} compute the training loss over both the input and output tokens. In our experiments, we restrict the loss computation to the output tokens, which yields better performance across methods. Adapter modules are added to the attention layers' query, key, and value projections, as well as the up and down projections of the MLP layers. For \msphi{}, they are also added to the MLP gate projections.

We fine-tune the models for three epochs. During training, the models are evaluated four times per epoch, and the best checkpoint is selected based on the validation loss. See Section~\ref{sec:csr_training_details} of the supplementary material for further details on the training setup.

The fine-tuned models are evaluated on the following commonsense reasoning datasets: (1) BoolQ \citep{clark2019boolq}, (2) PIQA \citep{bisk2020piqa}, (3) Social IQa \citep{sap-etal-2019-social}, (4) HellaSwag \citep{zellers2019hellaswag}, (5) WinoGrande \citep{sakaguchi2020winogrande}, (6) ARC \citep{clark2018arc}, and (7) OpenBookQA \citep{mihaylov2018can}. We use beam search with a beam size of four and generate a maximum of 32 tokens.

Table~\ref{tab:commonsense_results} presents the performance of various adaptation methods. The parameter percentages are shown relative to the total model size. These results demonstrate that \vega{} achieves performance comparable to other methods and surpasses both VeRA and OSoRA despite using a rank of 64 across models. In addition, the three initialization strategies for \vega{} yield broadly comparable results on commonsense reasoning tasks.

Unlike \citet{albert2025randlora}, we do not observe a significant performance gap between VeRA and other methods. We believe the larger performance gap observed by \citet{albert2025randlora} for VeRA can be explained by the choice of the $d_\text{initial}$ hyperparameter, in addition to the difference in the training loss computation.

\vspace{-0.5em}

\paragraph{Mathematical Reasoning.}
\label{sec:mathematical_reasoning}
We fine-tune \olmo{} (7B) on 100K examples from MetaMathQA \citep{yu2024metamath} and use an additional 10K examples for validation. Adapter modules are added to the attention layers' query, key, and value projections, as well as the up and down projections of the MLP layers. The models are fine-tuned for one epoch. During training, the validation is performed 10 times, and the best checkpoint is selected based on the validation loss. Further details on the training setup are provided in Section~\ref{sec:math_reasoning_training_details} of the supplementary material.

The fine-tuned models are evaluated on the GSM8k test set \citep{cobbe2021gsm8k}. We use greedy decoding and generate a maximum of 256 tokens. Accuracy is computed by matching the final answer with the ground truth.

Results for these experiments are presented in Table~\ref{tab:metamath_code_results}. These results show that \vega{} surpasses both VeRA and OSoRA despite using a rank of only 64. Similar to earlier experiments, the three initializations for \vega{} achieve comparable performance, with \initvr[\small] performing the best.

\paragraph{Code Generation.}
We fine-tune \olmo{}~(7B) on 100K examples from Code-Feedback \citep{zheng2024opencodeinterpreter} and use an additional 10K examples for validation. We use the same training setup as used for mathematical reasoning experiments.

The fine-tuned models are evaluated on HumanEval \citep{chen2021codex}. We use beam search with a beam size of four and generate a maximum of 256 tokens. For scoring the model generations, we use the official implementation.\footnote{\url{https://github.com/openai/human-eval}}

Table~\ref{tab:metamath_code_results} shows the performance of various adaptation methods. The results indicate that \vega{} matches the performance of VeRA and OSoRA while requiring substantially lower ranks. For code generation, \initvr[\small] achieves the strongest results, while the other two initializations outperform VeRA.  Additionally, we observe that LoRA outperforms vector-based adaptation methods by a considerable margin.

\paragraph{Instruction Tuning.}
Instruction tuning adapts pre-trained language models to follow instructions more effectively. For our experiments, we fine-tune \mistral{} \citep[7B;][]{jiang2023mistral7b} on the Alpaca dataset \citep{alpaca}. We largely follow the training setup of \citet{kopiczko2024vera} and additionally perform a learning rate sweep. We use a single random seed for all methods. Adapter modules are added to all linear projections within the attention and MLP layers. Models are fine-tuned for one epoch without any validation, and the model at the end of the epoch is used for evaluation. For \vega{}, we only use \initvr[\small], as it is the most generic initialization out of the three strategies. Due to the costs associated with evaluation, we use the GPT-4 reference answers from MT-Bench and compute the loss on these answers to select the best learning rate for each method. Further details on the training setup are provided in Section~\ref{sec:instruction_tuning_training_details} of the supplementary material.

\begin{table}[t]
    \caption{Performance of fine-tuned \olmo{} (7B) on mathematical reasoning and code generation with different adaptation methods. We report accuracy for GSM8k and pass@1 for HumanEval. Experiments are repeated three times, and the averages are reported.}
    \label{tab:metamath_code_results}
    \setlength{\tabcolsep}{3pt}
    \newcolumntype{Z}{>{\setlength{\tabcolsep}{1pt}}r}
    \begin{center}
    \begin{small}
    \begin{tabular}{lZcc}
    \toprule
    \multirow{2}{*}{\textbf{Method}} & \textbf{Params} & \textbf{GSM8k} & \textbf{HEval} \\
    {} & (M) & (Acc.) & (pass@1) \\
    \midrule
    LoRA ($r = 8$) & 14.02 & 70.9{\tiny $\pm$0.7} & 27.6{\tiny $\pm$2.3} \\
    VeRA ($r = 512$) & 0.96 & 70.4{\tiny $\pm$0.8} & 20.1{\tiny $\pm$1.8} \\
    OSoRA ($r = 512$) & 0.96 & 69.7{\tiny $\pm$0.6} & 23.4{\tiny $\pm$0.9} \\
    \rowcolor{Gray!30}
    \vega{} ($r = 64$, \initvrur{}) & 0.89 & 70.3{\tiny $\pm$0.9} & 22.0{\tiny $\pm$1.6} \\
    \rowcolor{Gray!30}
    \vega{} ($r = 64$, \initvrutwor{}) & 0.89 & 70.6{\tiny $\pm$0.4} & 22.2{\tiny $\pm$1.3} \\
    \rowcolor{Gray!30}
    \vega{} ($r = 64$, \initvr{}) & 0.89 & 70.8{\tiny $\pm$0.8} & 23.2{\tiny $\pm$0.6} \\
    \bottomrule
    \end{tabular}
    \end{small}
    \end{center}
\end{table}

We test the fine-tuned models on MT-Bench \citep{zheng2023judging}. The best model for each method is evaluated using GPT-4.\footnote{\texttt{gpt-4-0613} as of Nov 25, 2025.}

Table~\ref{tab:mtbench} presents the performance of different adaptation methods. These results demonstrate that \vega{} achieves performance comparable to both LoRA and VeRA despite using a rank of 32 and even outperforms both on the first turn questions.

\subsection{Image Classification}
\label{sec:image_classification}
Our final evaluation focuses on image classification. We adapt the vision backbones of DinoV2 ViT-B/14 \citep{oquab2023dinov2} and CLIP ViT-L/14 \citep{radford2021learning} across four image classification datasets: (1) CIFAR100 \citep{krizhevsky2009learning}, (2) Food101 \citep{bossard2014food}, (3) Flowers102 \citep{nilsback2008automated}, and (4) RESISC45 \citep{cheng2017remote}. Adapter modules are added to the attention layers' query, key, and value projections, as well as the linear projections in the MLP layers. We largely follow the training setup of \citet{albert2025randlora} and additionally perform a learning rate sweep. Further details on the training setup are provided in Section~\ref{sec:img_class_training_details} of the supplementary material.

Table~\ref{tab:vision_results} reports the results across different adaptation methods. Consistent with earlier experiments, \vega{} maintains competitive performance with other adaptation methods despite using a rank of 32. Across all datasets and models, the three initialization strategies for \vega{} achieve broadly similar performance.
\begin{table}[t]
    \setlength{\tabcolsep}{4pt}
    \caption{Average scores on MT-Bench given by GPT-4 to the responses from \mistral{} (7B) fine-tuned on the Alpaca dataset. For \vega{}, \initvr[\small] is used.}
    \label{tab:mtbench}
    \begin{center}
    \begin{small}
    \begin{tabular}{lrccc}
    \toprule
    \multirow{2}{*}{\textbf{Method}} & \textbf{Params} & \multicolumn{3}{c}{\textbf{Score}} \\
    {} & (M) & Turn 1 & Turn 2 & Avg. \\
    \midrule
    Pre-trained & - & 3.825 & 2.247 & 3.051 \\
    LoRA ($r = 64$) & 167.8 & 6.906 & 5.700 & 6.303 \\
    VeRA ($r = 1024$) & 1.6 & 6.956 & 5.375 & 6.166 \\
    \rowcolor{\vegacolor}
    \vega{} ($r = 32$) & 1.4 & 6.963 & 5.263 & 6.113 \\
    \bottomrule
    \end{tabular}
    \end{small}
    \end{center}
\end{table}
\begin{table*}[t]
    \caption{Performance of various adaptation methods on image classification datasets. We report accuracy for all tasks. Experiments are repeated three times, and their averages are reported.}
    \label{tab:vision_results}
    \begin{center}
    \begin{small}
    \begin{tabular}{lrcccc}
    \toprule
    \multirow{2}{*}{\textbf{Method}} & \textbf{Params} & \textbf{CIFAR100} & \textbf{Food101} & \textbf{Flowers102} & \textbf{RESISC45} \\
    {} & (M) & \multicolumn{4}{c}{(Acc.)} \\
    \midrule
    \multicolumn{6}{c}{\textit{DINOv2-ViT-B/14}} \\
    \midrule
    Full Fine-Tuning & 86.73 & 93.6{\tiny $\pm$0.2} & 93.9{\tiny $\pm$0.0} & 99.1{\tiny $\pm$0.2} & 97.2{\tiny $\pm$0.2} \\
    LoRA ($r = 32$) & 4.72 & 93.7{\tiny $\pm$0.1} & 93.8{\tiny $\pm$0.0} & 99.6{\tiny $\pm$0.0} & 96.6{\tiny $\pm$0.2} \\
    VeRA ($r = 256$) & 0.09 & 93.0{\tiny $\pm$0.3} & 93.2{\tiny $\pm$0.1} & 99.3{\tiny $\pm$0.1} & 96.0{\tiny $\pm$0.1} \\
    RandLoRA ($r = 10$) & 3.59 & 93.7{\tiny $\pm$0.1} & 94.1{\tiny $\pm$0.0} & 99.6{\tiny $\pm$0.0} & 96.9{\tiny $\pm$0.2} \\
    \rowcolor{\vegacolor}
    \vega{} ($r = 32$, \initvrur{}) & 0.08 & 93.1{\tiny $\pm$0.3} & 93.1{\tiny $\pm$0.1} & 99.2{\tiny $\pm$0.1} & 96.2{\tiny $\pm$0.0} \\
    \rowcolor{\vegacolor}
    \vega{} ($r = 32$, \initvrutwor{}) & 0.08 & 93.0{\tiny $\pm$0.1} & 93.1{\tiny $\pm$0.1} & 99.3{\tiny $\pm$0.1} & 95.9{\tiny $\pm$0.1} \\
    \rowcolor{\vegacolor}
    \vega{} ($r = 32$, \initvr{}) & 0.08 & 93.0{\tiny $\pm$0.1} & 93.2{\tiny $\pm$0.1} & 99.2{\tiny $\pm$0.2} & 95.8{\tiny $\pm$0.1} \\
    \midrule
    \multicolumn{6}{c}{\textit{CLIP-ViT-L/14}} \\
    \midrule
    Full Fine-Tuning & 303.28 & 92.8{\tiny $\pm$0.2} & 95.0{\tiny $\pm$0.1} & 98.3{\tiny $\pm$0.2} & 97.3{\tiny $\pm$0.1} \\
    LoRA ($r = 32$) & 12.58 & 93.1{\tiny $\pm$0.1} & 95.2{\tiny $\pm$0.2} & 98.3{\tiny $\pm$0.1} & 97.2{\tiny $\pm$0.1} \\
    VeRA ($r = 256$) & 0.23 & 92.4{\tiny $\pm$0.1} & 95.4{\tiny $\pm$0.1} & 98.7{\tiny $\pm$0.0} & 96.9{\tiny $\pm$0.1} \\
    RandLoRA ($r = 10$) & 12.78 & 93.0{\tiny $\pm$0.1} & 95.4{\tiny $\pm$0.2} & 98.2{\tiny $\pm$0.3} & 97.4{\tiny $\pm$0.1} \\
    \rowcolor{\vegacolor}
    \vega{} ($r = 32$, \initvrur{}) & 0.20 & 92.4{\tiny $\pm$0.2} & 95.4{\tiny $\pm$0.0} & 98.4{\tiny $\pm$0.2} & 96.9{\tiny $\pm$0.3} \\
    \rowcolor{\vegacolor}
    \vega{} ($r = 32$, \initvrutwor{}) & 0.20 & 92.3{\tiny $\pm$0.2} & 95.4{\tiny $\pm$0.1} & 98.5{\tiny $\pm$0.2} & 97.0{\tiny $\pm$0.1} \\
    \rowcolor{\vegacolor}
    \vega{} ($r = 32$, \initvr{}) & 0.20 & 92.4{\tiny $\pm$0.2} & 95.3{\tiny $\pm$0.0} & 98.7{\tiny $\pm$0.2} & 97.0{\tiny $\pm$0.1} \\
    \bottomrule
    \end{tabular}
    \end{small}
    \end{center}
\end{table*}

\subsection{Training Time Comparison}
\label{sec:training_time_comparison}
In this section, we quantify the training time of different adaptation methods. The time excludes the duration of validation loops. We report the average times for fine-tuning \qwen{} (0.5B) on Commonsense-15K, \olmo{} (7B) on MetaMathQA, and DINOv2-ViT-B/14 (87M) and CLIP-ViT-L/14 (303M) on the four image classification datasets.

Table~\ref{tab:time_comparison} presents the training times for various adaptation methods across models of different sizes. The results show that while \vega{} is only slightly better than the other methods on DINOv2-ViT-B/14, it gradually closes the gap with LoRA as the model size increases, proving to be significantly faster than the other vector-based adaptation methods.

\section{DISCUSSION}
In our experiments, we estimate the first-step full fine-tuning gradient using a single batch of examples. The results indicate that this approach works well across a range of tasks, and using different sets of examples for the first-step full fine-tuning gradient computation, controlled by varying the seed, has little effect on performance. Furthermore, the three initialization strategies for \vega{} perform comparably across tasks and models, providing further support for the solution derived in \eqref{eq:init_sol}.

We also investigate whether incorporating more batches into the gradient estimation and taking an average improves the model performance. We repeat the commonsense reasoning experiments for \vega{} with \initvrutwor[\small] and estimate the gradient by averaging over multiple batches. The training setup is the same as in Section~\ref{sec:commonsense_reasoning}. Figure~\ref{fig:grad_steps_vs_perf} shows the average performance for three models with different numbers of examples used for gradient estimation. These results suggest that incorporating more batches in the gradient computation does not improve model performance.

\begin{table}[t]
    \setlength{\tabcolsep}{3pt}
    \caption{Time taken for fine-tuning \qwen{} on Commonsense-15K, \olmo{} on MetaMathQA, and DINOv2-ViT-B/14 and CLIP-ViT-L/14 on the four image classification datasets. For each model, the training configuration is consistent across methods.}
    \label{tab:time_comparison}
    \begin{center}
    \begin{small}
    \begin{tabular}{lcccc}
    \toprule
    \textbf{Method} & \textbf{DINOv2} & \textbf{CLIP} & \textbf{\qwen{}} & \textbf{\olmo{}}  \\
    {} & (87M) & (303M) & (494M) & (7B) \\
    \midrule
    LoRA & 1h21m & 4h59m & 11m & 2h54m \\
    VeRA & 1h55m & 7h51m & 27m & 3h26m \\
    OSoRA & - & - & 23m & 3h25m \\
    RandLoRA & 2h05m & 7h55m & 23m & - \\
    \rowcolor{\vegacolor}
    \vega{} & 1h49m & 6h53m & 12m & 2h57m \\
    \bottomrule
    \end{tabular}
    \end{small}
    \end{center}
\end{table}
\begin{figure}[t]
    \centering
    \resizebox{0.9\linewidth}{!}{
    \fontfamily{phv}\selectfont\Large
    \includesvg{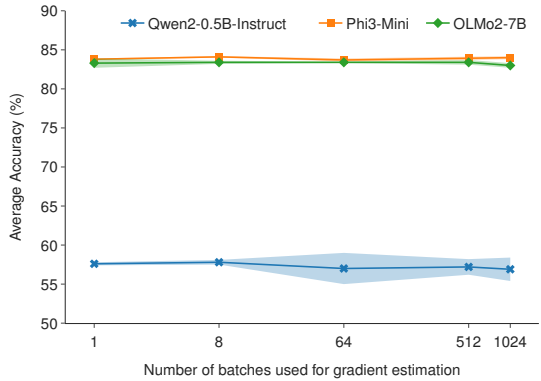}
    }
    \caption{Average commonsense reasoning performance for \vega{} (\initvrutwor[\small]) when multiple batches of examples are used to estimate the first-step full fine-tuning gradient.}
    \label{fig:grad_steps_vs_perf}
\end{figure}
\section{LIMITATIONS}
In this section, we note the limitations and potential extensions of our work. Although our method performs well on the datasets evaluated in this paper, its reliance on fixed bases computed from a single batch of examples may not fully capture the diversity of datasets composed of mixtures of multiple supervised fine-tuning sources, for example, Tulu 3 SFT mixture \citep{lambert2024tulu}, and may require further adaptation. In such cases, more sophisticated initialization strategies, such as incorporating gradients from multiple tasks within the mixture or combining random and gradient-based bases, could prove beneficial. Strategies for selecting the most representative examples for a dataset could improve performance across benchmarks. We leave these avenues for future work.

\section{CONCLUSION}
In this work, we introduced \vega{}, a parameter-efficient adaptation method for large language models. Unlike previous vector-based approaches, \vega{} constructs its bases from the first-step full fine-tuning gradient. Because these bases are derived from the fine-tuning data, \vega{} reduces the rank requirements needed to match the performance of existing vector-based adaptation methods. This reduction in required ranks leads to faster training times comparable to LoRA-like methods while preserving the extreme parameter efficiency of vector-based adaptation. We evaluated the proposed method across tasks in natural language understanding, natural language generation, and image classification, as well as on models of varying sizes. The results demonstrate that \vega{} outperforms or matches existing parameter-efficient methods while using significantly fewer trainable parameters and reducing rank requirements by a factor of eight compared to prior vector-based adaptation methods.

\subsubsection*{Acknowledgements}
This work was supported in part by the Illinois Campus Cluster Program and the Delta Advanced Computing and Data Resource. Delta is supported by the National Science Foundation (award OAC 2005572) and the State of Illinois and is a joint effort of the University of Illinois Urbana-Champaign and its National Center for Supercomputing Applications. We also thank the anonymous reviewers for their feedback.

\bibliographystyle{apalike}
\bibliography{custom}

\section*{Checklist}

\begin{enumerate}

  \item For all models and algorithms presented, check if you include:
  \begin{enumerate}
    \item A clear description of the mathematical setting, assumptions, algorithm, and/or model. [Yes, Section~\ref{sec:method}]
    \item An analysis of the properties and complexity (time, space, sample size) of any algorithm. [Yes, Section~\ref{sec:training_time_comparison}]
    \item (Optional) Anonymized source code, with specification of all dependencies, including external libraries. [No] 
  \end{enumerate}

  \item For any theoretical claim, check if you include:
  \begin{enumerate}
    \item Statements of the full set of assumptions of all theoretical results. [Yes]
    \item Complete proofs of all theoretical results. [Yes, Section~\ref{appendix:missing_proofs} in the supplementary material]
    \item Clear explanations of any assumptions. [Yes]
  \end{enumerate}

  \item For all figures and tables that present empirical results, check if you include:
  \begin{enumerate}
    \item The code, data, and instructions needed to reproduce the main experimental results (either in the supplemental material or as a URL). [\url{https://github.com/neerajgangwar/giva}]
    \item All the training details (e.g., data splits, hyperparameters, how they were chosen). [Yes, Section~\ref{appendix:training_details} in the supplementary material]
    \item A clear definition of the specific measure or statistics and error bars (e.g., with respect to the random seed after running experiments multiple times). [Yes]
    \item A description of the computing infrastructure used. (e.g., type of GPUs, internal cluster, or cloud provider). [Yes, Section~\ref{appendix:training_details} in the supplementary material]
  \end{enumerate}

  \item If you are using existing assets (e.g., code, data, models) or curating/releasing new assets, check if you include:
  \begin{enumerate}
    \item Citations of the creator if your work uses existing assets. [Yes]
    \item The license information of the assets, if applicable. [Not Applicable, we have used publicly available models and datasets]
    \item New assets either in the supplemental material or as a URL, if applicable. [Not Applicable]
    \item Information about consent from data providers/curators. [Not Applicable, we have used publicly available models and datasets]
    \item Discussion of sensible content if applicable, e.g., personally identifiable information or offensive content. [Not Applicable]
  \end{enumerate}

  \item If you used crowdsourcing or conducted research with human subjects, check if you include:
  \begin{enumerate}
    \item The full text of instructions given to participants and screenshots. [Not Applicable]
    \item Descriptions of potential participant risks, with links to Institutional Review Board (IRB) approvals if applicable. [Not Applicable]
    \item The estimated hourly wage paid to participants and the total amount spent on participant compensation. [Not Applicable]
  \end{enumerate}

\end{enumerate}

\appendix
\onecolumn
\thispagestyle{empty}
\aistatstitle{\vega{}: \papertitle{}\\Supplementary Materials}

\setcounter{theorem}{0}

\section{MISSING PROOFS}
\label{appendix:missing_proofs}
\begin{theorem}
Consider the following optimization problem
\begin{equation*}
    \label{appeq:optim_rewrite}
    \argmin_{A, B} \Vert \nabla_{W} \mathcal{L} (W_\pt) A^T B^T B A - \nabla_W \mathcal{L} (W_\pt) \Vert_F
\end{equation*}
Here, $\nabla_{W} \mathcal{L} (W_\pt) \in \mathbb{R}^{m \times d}$,  $B \in \mathbb{R}^{m \times r}$, $A \in \mathbb{R}^{r \times d}$, and $r < \operatorname{min}(m, d)$. The optimization problem reaches its minimum when
\begin{equation*}
    A = V_{r}^T, B^T B = \mathbb{I}^{r \times r}
\end{equation*}
$V_r$ denotes the right singular vectors of $\nabla_W \mathcal{L} (W_\pt)$, corresponding to the $r$ largest singular values.
\end{theorem}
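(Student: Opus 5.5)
Write $G = \nabla_W \mathcal{L}(W_\pt)$ and $M = A^T B^T B A \in \mathbb{R}^{d \times d}$. The plan is to reduce the problem to a best rank-$r$ approximation and then invoke the Eckart--Young--Mirsky theorem.

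\textbf{Step 1: lower bound.} Since $A$ has only $r$ rows, $\operatorname{rank}(M) \le r$, and hence $\operatorname{rank}(GM) \le r$. Let $G = U \Sigma V^T$ be an SVD with singular values $\sigma_1 \ge \sigma_2 \ge \dots$. Eckart--Young--Mirsky gives, for every feasible pair $(A,B)$,
\begin{equation*}
\Vert G M - G \Vert_F \ge \min_{\operatorname{rank}(X)\le r} \Vert X - G\Vert_F = \Big(\sum_{i>r} \sigma_i^2\Big)^{1/2}.
\end{equation*}
This value does not depend on $A$ or $B$, so it is a lower bound on the objective.

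\textbf{Step 2: attainment.} Take $A = V_r^T$ and any $B$ with $B^T B = \mathbb{I}^{r\times r}$. Then $M = V_r \mathbb{I} V_r^T = V_r V_r^T$, which is the orthogonal projector onto the span of the top $r$ right singular vectors. Expanding $G$ gives
\begin{equation*}
G V_r V_r^T = \sum_i \sigma_i u_i v_i^T V_r V_r^T = \sum_{i\le r} \sigma_i u_i v_i^T = U_r \Sigma_r V_r^T,
\end{equation*}
using orthonormality of the $v_i$. This is exactly the truncated SVD, so the objective equals $(\sum_{i>r}\sigma_i^2)^{1/2}$ and the lower bound from Step 1 is attained. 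Such a $B$ exists because $r < m$; for example, any $r$ orthonormal columns in $\mathbb{R}^m$ will do. This proves the claim.

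\textbf{Main obstacle.} The step needing the most care is Step 1: the objective must be bounded over all pairs $(A,B)$, not only those where $M$ is a projector. The rank argument handles this cleanly because it requires no structure on $M$ beyond $\operatorname{rank}(GM)\le r$.

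Two further remarks. First, the minimizer is not unique: it suffices that $M$ act as the identity on $\operatorname{span}(V_r)$, and $V_r$ may be replaced by $V_r Q$ for any orthogonal $Q$. The theorem should therefore be read as stating that this choice achieves the minimum, not that it is the only minimizer. Second, if $\sigma_r = \sigma_{r+1}$, then "the $r$ largest singular values" is ambiguous, but any valid choice of $V_r$ gives the same optimal value.
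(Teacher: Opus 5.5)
Your proposal is correct and takes essentially the same route as the paper: bound $\operatorname{rank}(G A^T B^T B A) \le r$, invoke Eckart--Young, then substitute $A = V_r^T$, $B^T B = \mathbb{I}^{r \times r}$ to get $G V_r V_r^T = U_r \Sigma_r V_r^T$, the truncated SVD. Your version is somewhat more careful than the paper's: you state the lower bound $(\sum_{i>r}\sigma_i^2)^{1/2}$ explicitly, note that a suitable $B$ exists because $r < m$, and observe that the minimizer is not unique.
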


\begin{proof}
The rank of the first term in the optimization problem can be computed as
\begin{equation*}
    \operatorname{rank}\left( \nabla_{W} \mathcal{L} (W_\pt) A^T B^T B A \right) \leq \operatorname{min} \left( \operatorname{rank} \left( \nabla_{W} \mathcal{L} (W_\pt) \right), \operatorname{rank}\left( A^T B^T B A \right) \right) \leq r   
\end{equation*}
Using the Eckart-Young theorem, the optimization problem reaches its minimum when $\nabla_{W} \mathcal{L} (W_\pt) A^T B^T B A$ is the best rank-$r$ approximation of $\nabla_W \mathcal{L} (W_\pt)$ \citep{eckart1936approximation}.

We prove the theorem by substituting the optimal $A$ and $B$ in the first term of the optimization problem. Let $U_r$ and $V_r$ denote the left and right singular vectors of $\nabla_W \mathcal{L} (W_\pt)$ corresponding to the $r$ largest singular values $\left\lbrace \sigma_1 \dots \sigma_r \right\rbrace$. Define $\Sigma_r = \operatorname{diag}(\sigma_1 \dots \sigma_r)$, and let $V_{d - r}$ denote the remaining right singular vectors.
\begin{equation*}
\begin{split}
    \nabla_{W} \mathcal{L} (W_\pt) A^T B^T B A &= U \Sigma V^T V_r V_r^T \\
    &= U \Sigma \begin{bmatrix}
        V_r^T \\
        V_{d - r}^T
    \end{bmatrix} V_r V_r^T \\
    &= U \Sigma \begin{bmatrix}
    \mathbb{I}^{r \times r} \\ \mathbf{0}^{d - r \times r}
    \end{bmatrix} V_r^T \\
    &= U_r \Sigma_r V_r^T
\end{split}
\end{equation*}
which is the best rank-$r$ approximation of $\nabla_W \mathcal{L} (W_\pt)$.
\end{proof}

\section{TRAINING DETAILS}
\label{appendix:training_details}
Our experiments use the \texttt{transformers} \citep{wolf2019huggingface} and \texttt{peft} \citep{peft} libraries from HuggingFace. For GLUE, we use a single A10 GPU (24GB VRAM), and the remaining experiments are conducted on a single A100 GPU (40GB VRAM) or a single L40S GPU (48GB VRAM). Gradient accumulation is applied when a full batch does not fit into memory.

\subsection{GLUE}
\label{sec:glue_training_details}
Table~\ref{tab:glue_hparams} shows the common hyperparameters used for fine-tuning RoBERTa on various GLUE tasks. Additionally, we search for the best learning rate over $\left\lbrace10^{-5}, 5 \times 10^{-5}, 10^{-4}\right\rbrace$ for full fine-tuning, over $\left\lbrace 5 \times 10^{-5}, 10^{-4}, 2 \times 10^{-4} \right\rbrace$ for LoRA and RandLoRA, and over $\left\lbrace 10^{-3}, 5 \times 10^{-3}, 10^{-2} \right\rbrace$ for VeRA and \vega{}. We do not use dropout or bias for any adapter modules. Finally, we use $\alpha = 2r$ for LoRA and RandLoRA and $d_\text{initial} = 0.1$ for VeRA.

\begin{table}[p]
    \caption{Common hyperparameters for fine-tuning RoBERTa on the GLUE tasks.}
    \label{tab:glue_hparams}
    \begin{center}
    \begin{small}
    \begin{tabular}{cccc}
    \toprule
    \textbf{Optimizer} & \textbf{Weight Decay} & \textbf{Warmup Steps} & \textbf{Epochs} \\
    \midrule
    AdamW & $10^{-2}$ & 0.06 & 10 \\
    \midrule
    \textbf{Gradient Clipping} & \textbf{Input Length} & \textbf{Batch Size} & \textbf{LR Scheduler} \\
    \midrule
    Norm-based at 1.0 & 128 & 64 & Linear \\
    \bottomrule
    \end{tabular}
    \end{small}
    \end{center}
\end{table}

\begin{table}[p]
    \caption{Common hyperparameters for fine-tuning \qwen{}, \msphi{}, and \olmo{} on commonsense reasoning.}
    \label{tab:csr_hparams}
    \begin{center}
    \begin{small}
    \begin{tabular}{cccc}
    \toprule
    \textbf{Optimizer} & \textbf{Weight Decay} & \textbf{Warmup Steps} & \textbf{Epochs} \\
    \midrule
    AdamW & $0$ & 100 & 3 \\
    \midrule
    \textbf{Gradient Clipping} & \textbf{Max Input Tokens} & \textbf{Batch Size} & \textbf{LR Scheduler} \\
    \midrule
    Norm-based at 1.0 & 512 & 16 & Linear \\
    \bottomrule
    \end{tabular}
    \end{small}
    \end{center}
\end{table}

\begin{table}[p]
    \caption{Common hyperparameters for fine-tuning \olmo{} on MetaMathQA and Code-Feedback.}
    \label{tab:math_code_hparams}
    \begin{center}
    \begin{small}
    \begin{tabular}{cccc}
    \toprule
    \textbf{Optimizer} & \textbf{Weight Decay} & \textbf{Warmup Steps} & \textbf{Epochs} \\
    \midrule
    AdamW & $0$ & 0.03 & 1 \\
    \midrule
    \textbf{Gradient Clipping} & \textbf{Max Input Tokens} & \textbf{Batch Size} & \textbf{LR Scheduler} \\
    \midrule
    Norm-based at 1.0 & 1024 & 32 & Cosine \\
    \bottomrule
    \end{tabular}
    \end{small}
    \end{center}
\end{table}

\begin{table}[p]
    \caption{Common hyperparameters for fine-tuning \mistral{} on Alpaca.}
    \label{tab:alpaca_hparams}
    \begin{center}
    \begin{small}
    \begin{tabular}{ccccc}
    \toprule
    \textbf{Optimizer} & \textbf{Weight Decay} & \textbf{Warmup Steps} & \textbf{Epochs} \\
    \midrule
    AdamW & $10^{-4}$ & 0.1 & 1 \\
    \midrule
    \textbf{Gradient Clipping} & \textbf{Max Input Tokens} & \textbf{Batch Size} & \textbf{LR Scheduler} \\
    \midrule
    Norm-based at 1.0 & 1024 & 16 & Cosine \\
    \bottomrule
    \end{tabular}
    \end{small}
    \end{center}
\end{table}

\begin{table}[p]
    \caption{Common hyperparameters for fine-tuning the vision backbones of \dinob{} and \clipl{} on CIFAR100, Food101, Flowers102, and RESISC45.}
    \label{tab:vision_hparams}
    \begin{center}
    \begin{small}
    \begin{tabular}{cccccc}
    \toprule
    \textbf{Optimizer} & \textbf{Weight Decay} & \textbf{Warmup Steps} & \textbf{Gradient Clipping} & \textbf{Batch Size} & \textbf{LR Scheduler} \\
    \midrule
    AdamW & $10^{-1}$ & 0.0 & Norm-based at 1.0 & 128 & Cosine \\
    \bottomrule
    \end{tabular}
    \end{small}
    \end{center}
\end{table}

\subsection{Commonsense Reasoning}
\label{sec:csr_training_details}
Table~\ref{tab:csr_hparams} shows the common hyperparameters used for fine-tuning models on commonsense reasoning. Additionally, we search for the best learning rate over $\left\lbrace 10^{-5}, 5 \times 10^{-5}, 10^{-4}, 2 \times 10^{-4} \right\rbrace$ for LoRA and RandLoRA, over $\left\lbrace 5 \times 10^{-4}, 10^{-3}, 5 \times 10^{-3}, 10^{-2} \right\rbrace$ for OSoRA, and over $\left\lbrace 10^{-3}, 5 \times 10^{-3}, 10^{-2} \right\rbrace$ for VeRA and \vega{}. We use a dropout of 0.05 and no bias for adapter modules. Finally, we use $\alpha = 2r$ for LoRA, $\alpha = \frac{2r}{\sqrt{n}}$ for RandLoRA with $n$ being the number of bases, and $d_\text{initial} = 1.0$ for VeRA.

\subsection{Mathematical Reasoning and Code Generation}
\label{sec:math_reasoning_training_details}
Table~\ref{tab:math_code_hparams} shows the common hyperparameters used for fine-tuning \olmo{} on MetaMathQA and Code-Feedback. Additionally, we search for the best learning rate over $\lbrace 5 \times 10^{-5}, 10^{-4}, 2 \times 10^{-4}, 5 \times 10^{-4}\rbrace$ for LoRA and over $\lbrace10^{-3}, 5 \times 10^{-3}, 10^{-2}, 2 \times 10^{-2}, 5 \times 10^{-2}\rbrace$ for VeRA, OSoRA, and \vega{}. We do not use dropout or bias for any adapter modules. Finally, we use $\alpha = 2r$ for LoRA and $d_\text{initial} = 1.0$ for VeRA.

\subsection{Instruction Tuning}
\label{sec:instruction_tuning_training_details}
Table~\ref{tab:alpaca_hparams} shows the common hyperparameters used for fine-tuning. Additionally, we search for the best learning rate over $\lbrace 5 \times 10^{-6}, 10^{-5}, 5 \times 10^{-5}, 10^{-4}, 4 \times 10^{-4}\rbrace$ for LoRA and over $\lbrace 10^{-4}, 5 \times 10^{-4}, 10^{-3}, 4 \times 10^{-3}, 5 \times 10^{-3}, 10^{-2}\rbrace$ for VeRA and \vega{}. We do not use dropout or bias for any adapter module. Finally, we use $\alpha = 2r$ for LoRA and $d_\text{initial} = 1.0$ for VeRA.

\subsection{Image Classification}
\label{sec:img_class_training_details}
Table~\ref{tab:vision_hparams} shows the common hyperparameters used for image classification experiments. Additionally, we search for the best learning rate over $\left\lbrace 10^{-5}, 2 \times 10^{-5}, 5 \times 10^{-5} \right\rbrace$ for full fine-tuning, over $\left\lbrace 5 \times 10^{-4}, 10^{-3}, 2 \times 10^{-3} \right\rbrace$ for LoRA and RandLoRA, and over $\left\lbrace 5 \times 10^{-3}, 10^{-2}, 2 \times 10^{-2} \right\rbrace$ for VeRA and \vega{}. We use 10 epochs for CIFAR100, 15 epochs for Food101 and RESISC45, and 40 epochs for Flowers102. No dropout or bias is used for any adapter module. Finally, we use $\alpha = 1$ for both LoRA and RandLoRA and $d_\text{initial} = 1.0$ for VeRA.

\end{document}